\newtheorem{theorem}{Theorem}
\newtheorem{lemma_apd}{Lemma}
\newcommand{\sterling}{{\textsc{Sterling}}}
\title{\sterling: Synergistic Representation Learning on Bipartite Graphs}
\author{
Baoyu Jing\textsuperscript{\rm 1},
Yuchen Yan\textsuperscript{\rm 1},
Kaize Ding\textsuperscript{\rm 2},
Chanyoung Park\textsuperscript{\rm 3},\\
Yada Zhu\textsuperscript{\rm 4},
Huan Liu\textsuperscript{\rm 5},
Hanghang Tong\textsuperscript{\rm 1}
}
\begin{document}
\maketitle

\begin{abstract}
A fundamental challenge of bipartite graph representation learning is how to extract informative node embeddings.
Self-Supervised Learning (SSL) is a promising paradigm to address this challenge. 
Most recent bipartite graph SSL methods are based on contrastive learning which learns embeddings by discriminating positive and negative node pairs.
Contrastive learning usually requires a large number of negative node pairs, which could lead to computational burden and semantic errors. 
In this paper, we introduce a novel synergistic representation learning model (\sterling) to learn node embeddings without negative node pairs. 
\sterling\ preserves the unique local and global synergies in bipartite graphs.
The local synergies are captured by maximizing the similarity of the inter-type and intra-type positive node pairs, and the global synergies are captured by maximizing the mutual information of co-clusters.
Theoretical analysis demonstrates that \sterling\ could improve the connectivity between different node types in the embedding space.
Extensive empirical evaluation on various benchmark datasets and tasks demonstrates the effectiveness of \sterling\ for extracting node embeddings.
\end{abstract}

\section{Introduction}
The bipartite graph is a powerful representation formalism to model interactions between two types of nodes, which has been used in various real-world applications.
In recommender systems \cite{wang2021graph,wei2022comprehensive,zhou2021high},  users, items and their interactions (e.g., buy) naturally formulate a bipartite graph;
in drug discovery \cite{pavlopoulos2018bipartite}, chemical interactions between drugs and proteins also formulate a bipartite graph; 
in information retrieval \cite{he2016birank}, clickthrough between queries and web pages can be modeled by a bipartite graph.

A fundamental challenge for bipartite graphs is how to extract informative node embeddings that can be easily used for downstream tasks (e.g., link prediction).
In recent years, Self-Supervised Learning (SSL) has become a prevailing paradigm to learn embeddings without human-annotated labels \cite{wu2021self,DBLP:conf/kdd/ZhengXZH22}.  
Despite its great performance on downstream tasks (e.g., node classification), most of the methods are designed for homogeneous graphs \cite{you2020graph,zhu2021graph,feng2022adversarial,DBLP:conf/cikm/ZhouZF0H22,zheng2021deeper,wang2023characterizing,ding2023eliciting} and heterogeneous graphs \cite{park2020unsupervised,jing2021hdmi,wang2021self,fu2020view,yan2022dissecting}.
These methods are usually sub-optimal to bipartite graphs \cite{gao2018bine}, and therefore, several methods have been specifically proposed for bipartite graphs.
BiNE \cite{gao2018bine} and BiANE \cite{huang2020biane} learn embeddings by maximizing the similarity of neighbors sampled by random walks; 
NeuMF \cite{he2017neural} and NGCF \cite{wang2019neural} train neural networks by reconstructing the edges; 
BiGI \cite{cao2021bipartite}, SimGCL \cite{yu2022graph} and COIN \cite{jing2022coin} further improve the quality of embeddings via contrastive learning.

\begin{figure}[t!]
    \centering
    \includegraphics[width=.39\textwidth]{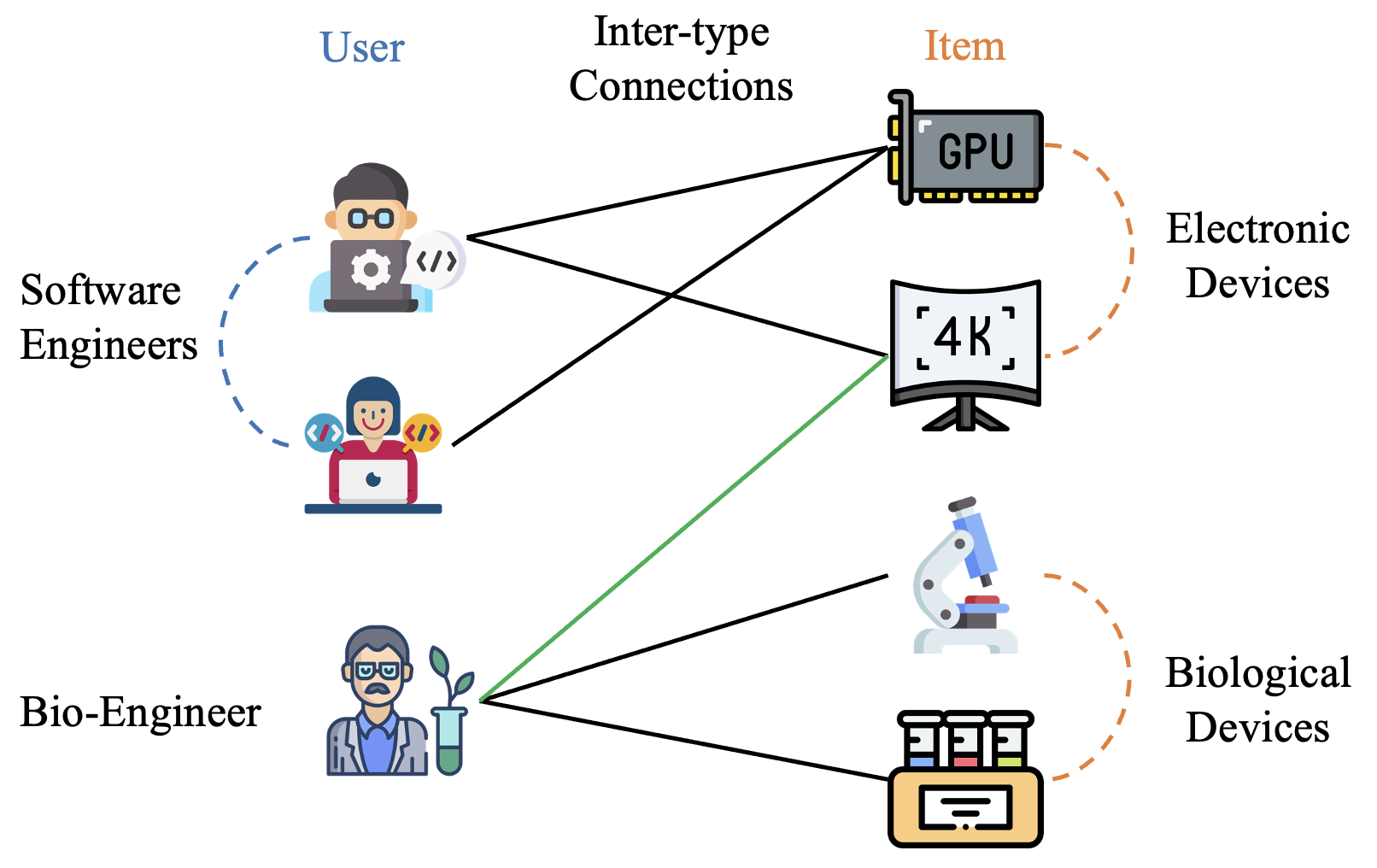}
    \caption{Example of a bipartite graph and its unique local and global properties. Locally, the dashed curves are 
    \emph{implicit} intra-type connections. Globally, the green line shows the inter-connection between the co-clusters, i.e., bio-engineer and electronic devices.}
    \label{fig:example}
\end{figure}

The aforementioned methods are mainly based on contrastive learning, which learns node embeddings by discriminating positive node pairs (e.g., local neighbors) and negative node pairs (e.g., unconnected nodes).
The success of contrastive learning heavily relies on the careful treatment of large-scale negative node pairs 
(e.g., adaptive selection of negative samples) \cite{grill2020bootstrap}.
There still lacks a principled mechanism to efficaciously construct desirable negative node pairs, which may cause computational burden \cite{thakoor2021large,DBLP:conf/sdm/ZhengZH23} and semantic errors \cite{li2022graph}.
Recently, BGRL \cite{thakoor2021large} and AFGRL \cite{lee2022augmentation} propose to learn node embeddings without negative pairs for homogeneous graphs via bootstrapping.
However, these non-contrastive methods cannot be effectively applied to bipartite graphs due to their incapability of capturing unique properties of bipartite graphs.

As shown in Fig. \ref{fig:example}, bipartite graphs have their unique local and global properties. 
Locally, besides the explicit inter-type connections (solid lines), the implicit intra-type synergies are also important (the dashed curves).
In Fig. \ref{fig:example}, it is very likely that a user (the girl software engineer) will buy the item (the monitor) that was bought by a similar user (the boy software engineer).
Globally, the two node types are inter-connected, and thus their cluster-level semantics are inherently synergistic. 
For example, users and items in Fig. \ref{fig:example} can be clustered based on their professions (software engineers and bio-engineer) and usage (electronic devices and biological devices). 
The relationship among these co-clusters is not a simple one-to-one correspondence, e.g., software engineers $\sim$ electronic devices and bio-engineer $\sim$ biological devices.
In fact, they are inherently inter-connected: the bio-engineer cluster is also connected to the electronic device cluster.
Neither independently nor equally treating these co-clusters could capture such a synergy.

In this paper, we introduce a novel synergistic representation learning model (\sterling) for bipartite graphs.
Compared with bipartite contrastive learning methods, \sterling\ is a non-contrastive SSL method that does not require negative node pairs. 
Compared with SSL methods on general graphs, \sterling\ captures both local and global properties of bipartite graphs.
For the local synergies, we maximize the similarity of the positive node embedding pairs.
When creating positive node pairs, not only do we consider the inter-type synergies (e.g., connected users and items), but also the intra-type synergies (e.g., similar users).
For the global synergies, we introduce a simple end-to-end deep co-clustering model to produce co-clusters of the two node types, 
and we capture the global cluster synergies by maximizing the mutual information of the co-clusters.
We further present the theoretical analysis and empirical evaluation of \sterling.
In theoretical analysis, we prove that maximizing the mutual information of co-clusters increases the mutual information of the two node types in the embedding space.
This theorem indicates that maximizing the mutual information of co-clusters could improve the connectivity of two node types in the embedding space.
In empirical evaluation, we extensively evaluate \sterling\ on various real-world datasets and tasks to demonstrate its effectiveness.

The major contributions are summarized as follows:
\begin{itemize}
    \item A novel SSL model (\sterling) is proposed for bipartite graphs, which preserves local and global synergies of bipartite graphs and does not require negative node pairs.
    \item Theoretical analysis shows that \sterling\ improves the connectivity of the two node types in embedding space.
    \item Extensive evaluation is conducted to demonstrate the effectiveness of the proposed \sterling. 
\end{itemize}

\begin{figure*}[t!]
    \centering
    \includegraphics[width=.7\textwidth]{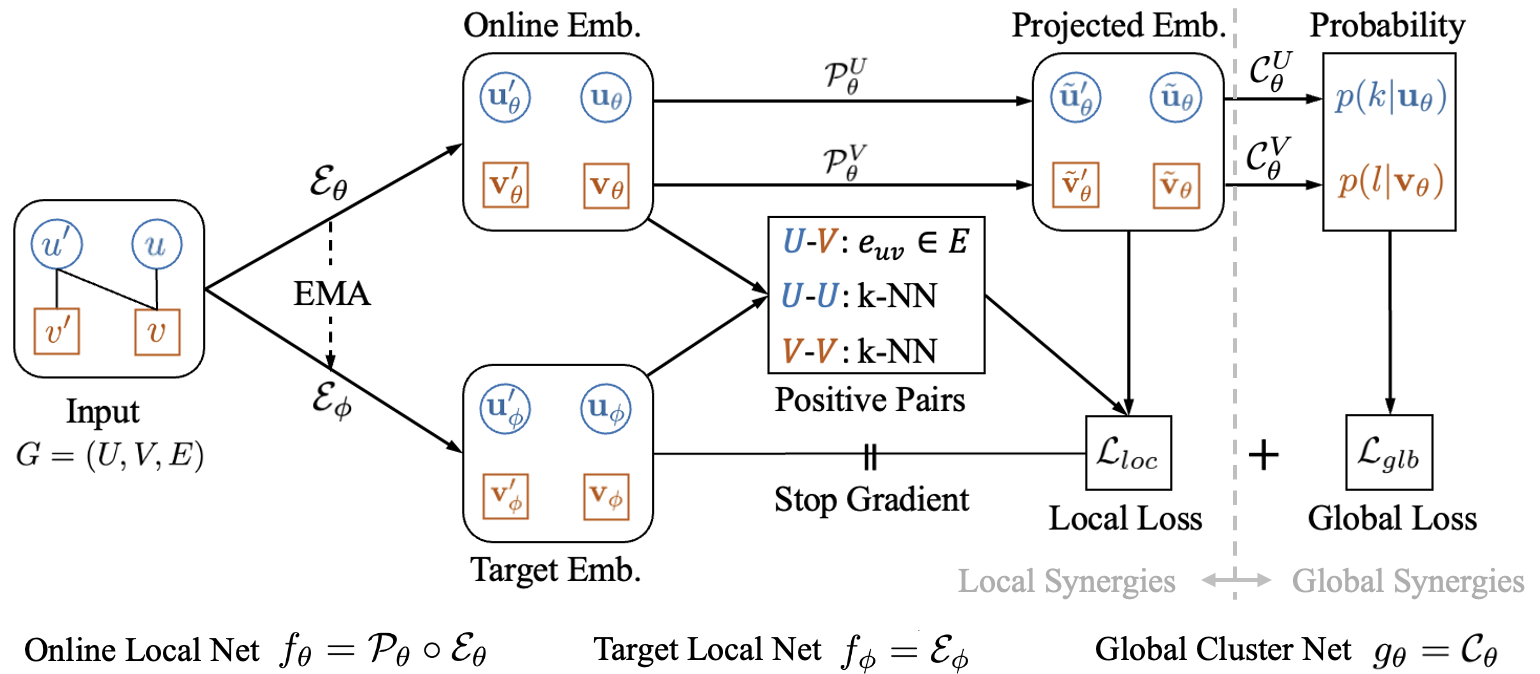}
    \caption{Overview of \sterling. $\mathcal{E}$, $\mathcal{P}$ and $\mathcal{C}$ are the encoder, projector and cluster network. $\theta$ and $\phi$ are parameters of the online and target networks. $\theta$ is updated by optimizing objectives, $\phi$ is updated via Exponential Moving Average (EMA) of $\theta$. For details, please refer to the methodology section.}
    \label{fig:overview}
\end{figure*}

\section{Related Work}
\textbf{Graph Embedding.}
Graphs are ubiquitous in real-world applications, e.g., social network \cite{goyal2018graph,zheng2023fairgen,yan2021bright,zeng2023hierarchical}, finance \cite{DBLP:conf/kdd/ZhouZ0H20,jing2021network} and natural language processing \cite{wu2021deep,jing2021multiplex,yan2021dynamic}.
A fundamental challenge of graph representation learning is to extract informative node embeddings \cite{wu2021self,zeng2023generative}.
Early methods DeepWalk \cite{perozzi2014deepwalk}, node2vec \cite{grover2016node2vec}, LINE \cite{tang2015line} use the random walk to sample node pairs and maximizes their similarities.
Contrastive learning methods, e.g., DGI \cite{velickovic2019deep} GraphCL \cite{you2020graph}, GCA \cite{zhu2021graph} and ARIEL \cite{feng2022ariel}, learn embeddings by discriminating positive and negative node pairs. 
Recently, some non-contrastive methods are proposed.
BGRL \cite{thakoor2021large} and AFGRL \cite{lee2022augmentation} learn embeddings via bootstrapping, and GraphMAE \cite{hou2022graphmae} learns embeddings by reconstructing the full graph from the masked graph.
All of these methods are designed for homogeneous graphs, yet many real-world graphs are heterogeneous \cite{hu2020heterogeneous,yan2023trainable,zeng2023parrot}.
Metapath2vec \cite{dong2017metapath2vec} extends node2vec to heterogeneous graphs.
DMGI \cite{park2020unsupervised} and HDMI \cite{jing2021hdmi} extends DGI to heterogeneous graphs.
HeCo \cite{wang2021self} and X-GOAL \cite{jing2022x} introduces co-contrastive learning and prototypical contrastive learning for heterogeneous graphs.
Although they achieved impressive performance on downstream tasks (e.g., classification), they are not tailored for bipartite graphs and usually have sub-optimal performance compared with bipartite graph methods on tasks such as recommendation and link prediction. 

\noindent
\textbf{Bipartite Graph Embedding.}
Bipartite graphs have been widely used to model interactions between two disjoint node sets \cite{he2020lightgcn,mao2021ultragcn,yan2023reconciling,zhang2023contrastive}.
Early methods such as BiNE \cite{gao2018bine} and BiANE \cite{huang2020biane} learn node embeddings based on biased random walks. 
IGE \cite{zhang2017learning} learns node embeddings based on the direct connection between nodes and edge attributes.
PinSage \cite{ying2018graph} combines graph convolutional network with random walks.
Collaborative filtering methods are also popular for bipartite graphs \cite{wei2020fast}.
NeuMF \cite{he2017neural} is the first neural collaborative filtering method.
NGCF \cite{wang2019neural} incorporates high-order collaborative signals to improve the quality of embeddings.
DirectAU \cite{wang2022towards} learns node embeddings from the perspective of alignment and uniformity.
In recent years, contrastive learning has been applied to bipartite graph.
BiGI \cite{cao2021bipartite} extends DGI to from homogeneous graphs to bipartite graphs.
SimGCL \cite{yu2022graph} and COIN \cite{jing2022coin} use InfoNCE \cite{oord2018representation} as the loss.
AdaGCL \cite{jiang2023adaptive} uses contrastive loss as an auxiliary signal for the recommendation task.
Different from these contrastive methods, \sterling\ does not require any negative node pairs, and further explores the synergies among co-clusters.

\noindent
\textbf{Co-Clustering.}
Co-clustering aims to partition rows and columns of a co-occurrence matrix into co-clusters simultaneously.
In practice, co-clustering algorithms usually have impressive improvements over traditional one-way clustering algorithms \cite{xu2019deep}.
CCInfo \cite{dhillon2003information} co-clusters matrices via a mutual information based objective.
SCC \cite{dhillon2001co} is based on spectral analysis.
BCC \cite{shan2008bayesian}, LDCC \cite{shafiei2006latent} and MPCCE \cite{wang2011nonparametric} are Bayesian approaches.
CCMod \cite{ailem2015co} obtains co-clusters by maximizing graph modularity.
SCMK \cite{kang2017twin} is a kernel based method.
DeepCC \cite{xu2019deep} is a deep learning based co-clustering method, which uses an auto-encoder to extract embeddings, a deep Gaussian mixture model to obtain co-cluster assignments, and a fixed input prior distribution to regularize co-clusters. 
Different from DeepCC, \sterling\ uses non-contrastive SSL to extract embeddings which can be used for various downstream tasks, and its loss function for co-clustering is simple and the joint distribution of different nodes is learned from data.

\section{Preliminary}
\textbf{Self-Supervised Learning on Bipartite Graphs.}
Given a bipartite graph $G=(U, V, E)$, where $U$, $V$ are disjoint node sets, and $E\subseteq U\times V$ is the edge set, the task is to extract informative node embeddings $\mathbf{U},\mathbf{V}\in\mathbb{R}^{|U|\times d}$ from $G$.
We use $u$, $v$ and $\mathbf{u}$, $\mathbf{v}$ to denote elements for $U$, $V$ and $\mathbf{U}$, $\mathbf{V}$.

\noindent
\textbf{Co-Clustering.}
Given a bipartite graph ${G}$, co-clustering maps $U$, $V$ into $N_K\ll|U|$, $N_L\ll|V|$ clusters via function $c_\theta$, which produces the probabilities of cluster assignments $p(k|u)$, $p(l|v)$ for nodes $u$, $v$.
Here $k$, $l$ are cluster indices.
We use $K$, $L$ to denote random variables of co-clusters. 

\section{Methodology}\label{sec:method}
\subsection{Overview of \sterling}
An overview of \sterling\ is shown in Fig. \ref{fig:overview}.
For local synergies, our idea is to obtain the node embeddings via an encoder $\mathcal{E}_\theta$ and maximize the similarity of positive node embedding pairs without minimizing the similarity of negative pairs (e.g., randomly sampled node pairs).
The positive pairs are selected based on both inter-type and intra-type synergies.
The inter-type $U$-$V$ positive pairs are the connected $u$-$v$ pairs ($e_{uv}\in E$).
The intra-type $U$-$U$ (or $V$-$V$) positive pairs are selected based on the k-NN $u$-$u$ (or $v$-$v$) pairs.
The pair ($\mathbf{u}_\theta$, $\mathbf{v}_\theta$) in Fig. 2 is an exemplar positive pair as they are connected in $G$: $e_{uv}\in E$.
However, directly maximizing the similarity of ($\mathbf{u}_\theta$, $\mathbf{v}_\theta$) without minimizing the similarity of negative pairs
may result in mode collapse (e.g., mapping all nodes to the same embedding) \cite{grill2020bootstrap}.
To address this issue, following \cite{grill2020bootstrap}, we use a target encoder $\mathcal{E}_\phi$ to obtain the bootstrapped target embeddings $(\mathbf{u}_\phi, \mathbf{v}_\phi)$, and also project the original online embeddings $(\mathbf{u}_\theta, \mathbf{v}_\theta)$ into $(\Tilde{\mathbf{u}}_\theta, \Tilde{\mathbf{v}}_\theta)$ via a projector $\mathcal{P}_\theta=(\mathcal{P}_\theta^U, \mathcal{P}_\theta^V)$, which could potentially add noise to the original embeddings $(\mathbf{u}_\theta, \mathbf{v}_\theta)$.
Then we maximize the similarity of $(\Tilde{\mathbf{u}}_\theta, \mathbf{v}_\phi)$ and $(\Tilde{\mathbf{v}}_\theta, \mathbf{u}_\phi)$.
We denote $f_\theta=\mathcal{P}_\theta\circ\mathcal{E}_\theta$ as the online local network and $f_\phi=\mathcal{E}_\phi$ as the target local network.
$f_\theta$ and $f_\phi$ are updated alternatively.
$f_\theta$ is trained by maximizing the similarity of positive node pairs $\mathcal{L}_{loc}$ and $f_\phi$ is updated via the Exponential Moving Average (EMA) of $\theta:\phi\leftarrow\tau\phi+(1-\tau)\theta,\tau\in[0,1]$.

For global synergies, we use a co-clustering network $g_\theta=\mathcal{C}_\theta=(\mathcal{C}^U_\theta, \mathcal{C}^V_\theta)$ to obtain the co-cluster probabilities $p(k|\Tilde{\mathbf{u}}_\theta)$, $p(l|\Tilde{\mathbf{v}}_\theta)$.
Since neural networks $\mathcal{P}$ are treated as an invective function in practice \cite{vincent2008extracting}, and thus $p(k|\Tilde{\mathbf{u}}_\theta)=p(k|\mathbf{u}_\theta)$, $p(l|\Tilde{\mathbf{v}}_\theta)=p(k|\Tilde{\mathbf{u}}_\theta)$.
The global objective $\mathcal{L}_{glb}$ is to maximize the mutual information of the co-clusters $K$ and $L$.
The global objective $\mathcal{L}_{glb}$ is also used to update $f_\theta$.
Note that co-clustering and mutual information calculation does not require negative node pairs.

After training, we use the online embeddings $\mathbf{u}_\theta$, $\mathbf{v}_\theta$ and co-cluster probabilities $p(k|\mathbf{u}_\theta)$, $p(l|\mathbf{v}_\theta)$ for downstream tasks.
$c_\theta=g_\theta\circ f_\theta$ is the final co-clustering function.

In the following content, we introduce local objective and global objective in detail.
Then we introduce the overall objective and provide theoretically analysis.

\subsection{Local Objective}
There are two kinds of local synergies among nodes in bipartite graphs: the explicit inter-type ($U$-$V$) and the implicit intra-type synergies ($U$-$U$, $V$-$V$).
The local objective $\mathcal{L}_{loc}$ captures both inter and intra-type synergies by maximizing the similarities of the inter and intra-type positive node pairs.

\noindent
\textbf{Inter-Type Synergies.}
Node embeddings $\mathbf{u}_\theta$ and $\mathbf{v}_\theta$ should be similar if $u$, $v$ are connected $e_{uv}\in E$.
Rather than directly maximizing the similarity of ($\mathbf{u}_\theta$,$\mathbf{v}_\theta$), we maximize the similarity of the projected online embedding $\Tilde{\mathbf{u}}_\theta$ and the target embedding $\mathbf{v}_\phi$, as well as $\Tilde{\mathbf{v}}_\theta$ and $\mathbf{u}_\phi$: 
\begin{equation}\label{eq:l_uv}
    \mathcal{L}_{uv} = -(\frac{\Tilde{\mathbf{u}}_\theta^T\mathbf{v}_\phi}{||\Tilde{\mathbf{u}}_\theta||\cdot||\mathbf{v}_\phi||} + \frac{\Tilde{\mathbf{v}}_\theta^T\mathbf{u}_\phi}{||\Tilde{\mathbf{v}}_\theta||\cdot||\mathbf{u}_\phi||})
\end{equation}
Note that when updating $\theta$ via the above objective function (as well as the following objective functions), 
$\phi$ is fixed.
$\phi$ is updated via EMA after $\theta$ is updated.
This practice could effectively avoid the mode collapse problem \cite{grill2020bootstrap}.

\noindent
\textbf{Intra-Type Synergies.}
If $u, u'\in U$ are highly correlated, then their embeddings should have a high similarity score\footnote{For clarity, we only use $U$-$U$ synergies to describe our method.
$V$-$V$ synergies are captured in the same way as $U$-$U$.}.
We determine the similarity of $u$, $u'$ from the perspectives of both graph structure and hidden semantics.

For the structure information, we use the Adamic-Adar (AA) index \cite{adamic2003friends}, which is the inverse log frequency (or degree) of the shared neighbors of $u$, $u'$:
\begin{equation}
    s_{aa}(u, u') = \sum_{v\in V_{uu'}}\frac{1}{\log(d_v)}
\end{equation}
where $V_{uu'}\subset V$ is the set of $v$ that connects with both $u$ and $u'$; $d_v$ is the degree of $v$. 
Essentially, the AA index calculates the second-order structure proximity and it has two characteristics:
(1) the more common neighbors $u$ and $u'$ share, the higher the AA index will be; (2) the lower the degree $d_v$ of the shared neighbor $v$, the higher the importance of $v$.

For the semantic information, given a node $u$, we determine its semantic similarity with another node $u'$ by:
\begin{equation}
    s_{emb}(u,u') = \frac{\mathbf{u}_\theta^T\mathbf{u}'_\phi}{||\mathbf{u}_\theta||\cdot||\mathbf{u}'_\phi||}
\end{equation}
The final similarity score between $u$ and $u'$ is thus:
\begin{equation}\label{eq:sim_score_full}
    s(u,u') = s_{aa}(u,u')\cdot s_{emb}(u,u')
\end{equation} Given $u\in U$, we select its k-Nearest-Neighbors (k-NN), i.e., top-K similar nodes, from $U$ to construct positive pairs.
Similar to $\mathcal{L}_{uv}$ in Equation \eqref{eq:l_uv}, the loss for each $u$ is:
\begin{equation}\label{eq:l_u}
    \mathcal{L}_{u} = \frac{-1}{N_{knn}}\sum_{u'\in \text{k-NN}(u)}(\frac{\Tilde{\mathbf{u}}_\theta^T\mathbf{u}'_\phi}{||\Tilde{\mathbf{u}}_\theta||\cdot||\mathbf{u}'_\phi||} +\frac{\Tilde{\mathbf{u}}^{'T}_\theta\mathbf{u}_\phi}{||\Tilde{\mathbf{u}}'_\theta||\cdot||\mathbf{u}_\phi||})
\end{equation}
where $N_{knn}$ is the number of the selected neighbors.
Note that for each $v\in V$, we use the same strategy as $u\in U$ described above to obtain $\mathcal{L}_v$:
\begin{equation}\label{eq:l_v}
    \mathcal{L}_{v} = \frac{-1}{N_{knn}}\sum_{v'\in \text{k-NN}(v)}(\frac{\Tilde{\mathbf{v}}_\theta^T\mathbf{v}'_\phi}{||\Tilde{\mathbf{v}}_\theta||\cdot||\mathbf{v}'_\phi||} +\frac{\Tilde{\mathbf{v}}^{'T}_\theta\mathbf{v}_\phi}{||\Tilde{\mathbf{v}}'_\theta||\cdot||\mathbf{v}_\phi||})
\end{equation}

\noindent
\textbf{Local Objective Function.}
During training, given a connected pair $(u,v)$, its local objective is:
\begin{equation}
    \mathcal{L}_{loc} = \lambda_{uv}\mathcal{L}_{uv} + \lambda_{u}\mathcal{L}_{u} + \lambda_{v}\mathcal{L}_{v}
\end{equation}
where 
$\mathcal{L}_{uv}$, $\mathcal{L}_u$ and $\mathcal{L}_v$ are obtained from Eq. \eqref{eq:l_uv}\eqref{eq:l_u}\eqref{eq:l_v}.

\subsection{Global Objective}
The two types of nodes $U$, $V$ are inherently correlated, and thus so as their clusters $K$, $L$, as illustrated by the green link in Fig. \ref{fig:example}.
Jointly co-clustering $U$ and $V$ usually produce better results than traditional one-side clustering \cite{xu2019deep}.
In this paper, we introduce a simple end-to-end co-clustering algorithm to capture the global cluster synergy by maximizing the mutual information of the co-clusters $I(K;L)$.
According to the definition of mutual information, to calculate $I(K;L)$, we need to obtain the joint distribution $p(k, l)$ and marginal distributions $p(k)$ and $p(l)$.
We decompose $p(k, l)$ by two other easy-to-obtain distributions $p(u,v)$ and $p(k,l|u,v)$ via $p(k, l)=\sum_{u, v}p(k, l|u, v)p(u, v)$.
In the following content, we first introduce the joint distribution $p(u,v)$, then introduce the conditional probability $p(k,l|u,v)$, and finally introduce the global objective $\mathcal{L}_{glb}$.

\noindent
\textbf{Joint Distribution  $p(u,v)$.}
The joint distribution $p(u, v)$ characterizes the connectivity of $u$, $v$.
Instead of simply deriving $p(u,v)$ from the edges $E$ and treating $p(u,v)$ as a fixed prior in \cite{xu2019deep}, \sterling\ learns $p(u,v)$ to encode both structure and semantic information.

For the structure information, we build an $n$-hop metapath \cite{dong2017metapath2vec} between $u$ and $v$ to find potential links between them.
The 1-hop $U$-$V$ metapath is the original $U$-$V$ graph, and the 2-hop $U$-$V$ metapath is the $U$-$V$-$U$-$V$ graph.
We denote $\mathbf{A}_{meta}$ as the adjacency matrix of the $n$-hop $u$-$v$ metapath.

For the semantic information, we construct $\mathbf{A}_{emb}$ from the extracted node embeddings:
\begin{equation}\label{eq:a_emb}
    \mathbf{A}_{emb} = \frac{1}{2}[\delta(\mathbf{U}_\theta\mathbf{V}_\phi^T) + \delta(\mathbf{U}_\phi\mathbf{V}_\theta^T)]
\end{equation}
where $\mathbf{U}_\theta$, $\mathbf{V}_\theta$ are online embeddings and $\mathbf{U}_\phi$, $\mathbf{V}_\phi$ are target embeddings, $\delta$ is an activation function.
We further filter out noisy connections by reseting the small values as 0:
\begin{equation}\label{eq:noise_filter}
    \mathbf{A}_{emb} = \max(\mathbf{A}_{emb}, \mu+\alpha\sigma)
\end{equation}
where $\mu$ and $\sigma$ are the mean and standard deviation of $\mathbf{A}_{emb}$, and $\alpha$ is a tunable threshold.

Finally, the joint distribution $p(U,V)$ is given by:
\begin{equation}\label{eq:p_u_v}
    p(U,V) = \frac{1}{Z}\mathbf{A}_{meta}\odot\mathbf{A}_{emb}
\end{equation}
where $Z$ is a normalization factor, $\odot$ is Hadamard product.

\noindent
\textbf{Conditional Distribution $p(k,l|u,v)$.}
As for $p(k, l|u, v)$, we obtain it via neural networks.
We first extract the online node embeddings $\mathbf{U}_\theta$ and $\mathbf{V}_\theta$ from the input bipartite graph ${G} = ({U}, {V}, {E})$ via $\mathcal{E}_\theta$.
Then we apply the function $\mathcal{C}_\theta\circ\mathcal{P}_\theta=(\mathcal{C}_\theta^U\circ\mathcal{P}_\theta^U,\mathcal{C}_\theta^V\circ\mathcal{P}_\theta^V)$ over $\mathbf{u}_\theta$ and $\mathbf{v}_\theta$ to obtain the co-cluster probabilities $p(k|\mathbf{u}_\theta)$ and $p(l|\mathbf{v}_\theta)$.
Since neural networks are usually treated as deterministic and injective functions in practice \cite{vincent2008extracting}, we have $p(k, l|u, v)=p(k, l|\mathbf{u}_\theta, \mathbf{v}_\theta)$.
Furthermore, since $\mathcal{C}_\theta^U\circ\mathcal{P}_\theta^U$ and $\mathcal{C}_\theta^V\circ\mathcal{P}_\theta^V$ have separate sets of parameters, it is natural to have $p(k, l|\mathbf{u}_\theta, \mathbf{v}_\theta)=p(k|\mathbf{u}_\theta)p(l|\mathbf{v}_\theta)$. 
Hence, we have $p(k, l|u, v)=p(k|\mathbf{u}_\theta)p(l|\mathbf{v}_\theta)$.

\noindent
\textbf{Global Objective Function.}
Combining the conditional distribution $p(k, l|u, v)=p(k|\mathbf{u}_\theta)p(l|\mathbf{v}_\theta)$ with the joint distribution $p(u, v)$ obtained from Equation \eqref{eq:p_u_v}, we have:
\begin{equation}\label{eq:pkl}
    p(k, l) = \sum_{u, v}p(k|\mathbf{u}_\theta)p(l|\mathbf{v}_\theta)p(u, v)
\end{equation}
Then we could easily obtain the marginal distributions $p(k)=\sum_{l}p(k, l)$, $p(l)=\sum_{k}p(k, l)$.
Since $p(k, l)$, $p(k)$ and $p(l)$ are calculated by neural networks, we can directly maximize $I(K;L)$.
The global loss $\mathcal{L}_{glb}$ is given by:
\begin{equation}
    \mathcal{L}_{glb}=-I(K;L)=-\sum_{k=1}^{N_K}\sum_{l=1}^{N_L}p(k,l)\log\frac{p(k,l)}{p(k)p(l)}
\end{equation}

\subsection{Overall Objective}\label{sec:overall_objective}
The overall objective function of \sterling\ is:
\begin{equation}
    \mathcal{L} = \mathcal{L}_{loc} + \mathcal{L}_{glb}
\end{equation}

\subsection{Theoretical Analysis}\label{sec:theory}
Theorem \ref{theorem:information_bound} shows that $I(\mathbf{U}_\theta;\mathbf{V}_\theta)$ is lower bounded by $I(K;L)$, indicating that maximizing $I(K;L)$ (or minimizing $\mathcal{L}_{glb}$) could improve the connectivity of $\mathbf{U}_\theta$ and $\mathbf{V}_\theta$ in the embedding space.
This theorem is corroborated by visualization results in Fig. \ref{fig:tsne_no_mi}-\ref{fig:tsne_full} in the Experiment section.
Please refer to Appendix for the proof.
\begin{theorem}[Information Bound]\label{theorem:information_bound}
The mutual information $I(\mathbf{U}_\theta;\mathbf{V}_\theta)$ of embeddings $\mathbf{U}_\theta$ and $\mathbf{V}_\theta$ is lower-bounded by the mutual information of co-clusters $I(K;L)$:
\begin{equation}\label{eq:theorem_bound}
    I(K;L)\leq I(\mathbf{U}_\theta;\mathbf{V}_\theta)
\end{equation}
\end{theorem}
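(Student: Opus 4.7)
The plan is to derive Theorem 1 as a direct consequence of the Data Processing Inequality, leveraging the conditional factorization $p(k,l \mid u,v) = p(k \mid \mathbf{u}_\theta)\,p(l \mid \mathbf{v}_\theta)$ that the paper already established when defining the global objective.

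First I would make the probabilistic structure explicit. The cluster network factorizes as $\mathcal{C}_\theta = (\mathcal{C}_\theta^U, \mathcal{C}_\theta^V)$, so the cluster variable $K$ is produced from $\mathbf{U}_\theta$ alone through $p(k \mid \mathbf{u}_\theta)$, and $L$ is produced from $\mathbf{V}_\theta$ alone through $p(l \mid \mathbf{v}_\theta)$. Consequently $K$ is conditionally independent of $(\mathbf{V}_\theta, L)$ given $\mathbf{U}_\theta$, and symmetrically $L$ is conditionally independent of $(\mathbf{U}_\theta, K)$ given $\mathbf{V}_\theta$. Combined with the coupling of $\mathbf{U}_\theta$ and $\mathbf{V}_\theta$ through the learned joint distribution $p(u,v)$ in Eq.~\eqref{eq:p_u_v}, this yields the Markov chain
\begin{equation}
    K \;\longrightarrow\; \mathbf{U}_\theta \;\longrightarrow\; \mathbf{V}_\theta \;\longrightarrow\; L.
\end{equation}

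Second, I would invoke the Data Processing Inequality twice along this chain. Applying DPI to the sub-chain $K \to \mathbf{U}_\theta \to L$ (valid because $K \perp L \mid \mathbf{U}_\theta$, since $L$ depends on $\mathbf{V}_\theta$ which is only linked to $K$ through $\mathbf{U}_\theta$) gives $I(K;L) \leq I(\mathbf{U}_\theta; L)$. Applying DPI again to $\mathbf{U}_\theta \to \mathbf{V}_\theta \to L$ gives $I(\mathbf{U}_\theta; L) \leq I(\mathbf{U}_\theta; \mathbf{V}_\theta)$. Chaining the two inequalities delivers Eq.~\eqref{eq:theorem_bound}.

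The main obstacle, such as it is, lies not in the inequality manipulation but in justifying the Markov structure carefully under the paper's construction. In particular, I need to be careful that the joint $p(k,l)$ is computed by marginalizing $p(k \mid \mathbf{u}_\theta)\,p(l \mid \mathbf{v}_\theta)$ against the same $p(u,v)$ used to define $I(\mathbf{U}_\theta; \mathbf{V}_\theta)$, so that the conditional independence $K \perp L \mid (\mathbf{U}_\theta, \mathbf{V}_\theta)$ actually holds with respect to a single coherent joint distribution. Once this is spelled out, the two DPI steps are routine and no further calculation is needed.
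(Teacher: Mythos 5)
Your proof is correct, but it takes a genuinely different route from the paper's. The paper proves the bound from scratch: it establishes a variational bound $\log p(k)p(l) \geq \sum_{\mathbf{u}_\theta,\mathbf{v}_\theta} p(\mathbf{u}_\theta,\mathbf{v}_\theta\mid k,l)\log\frac{p(\mathbf{u}_\theta,k)p(\mathbf{v}_\theta,l)}{p(\mathbf{u}_\theta,\mathbf{v}_\theta\mid k,l)}$ via Jensen's inequality, substitutes this into the definition of $I(K;L)$, and then algebraically simplifies using the conditional factorization $p(k,l\mid\mathbf{u}_\theta,\mathbf{v}_\theta)=p(k\mid\mathbf{u}_\theta)\,p(l\mid\mathbf{v}_\theta)$ to recover $I(\mathbf{U}_\theta;\mathbf{V}_\theta)$. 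You instead package the same conditional factorization as the Markov chain $K\to\mathbf{U}_\theta\to\mathbf{V}_\theta\to L$ and invoke the Data Processing Inequality twice. Your Markov chain claim checks out: the model's joint is $p(\mathbf{u}_\theta,\mathbf{v}_\theta)\,p(k\mid\mathbf{u}_\theta)\,p(l\mid\mathbf{v}_\theta) = p(\mathbf{u}_\theta)\,p(k\mid\mathbf{u}_\theta)\,p(\mathbf{v}_\theta\mid\mathbf{u}_\theta)\,p(l\mid\mathbf{v}_\theta)$, which is exactly the chain factorization since $p(\mathbf{u}_\theta)p(k\mid\mathbf{u}_\theta)=p(k)p(\mathbf{u}_\theta\mid k)$. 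The two arguments rest on the same structural fact about the model. What you gain is conceptual clarity and brevity --- DPI makes the result read as ``clusters are post-processings of embeddings, so they cannot carry more shared information'' --- at the cost of relying on DPI as a black box; the paper's approach is self-contained, exposing the Jensen step that underlies DPI anyway, which some venues prefer for an appendix proof. You also correctly flag the one subtlety shared by both proofs: $I(K;L)$ and $I(\mathbf{U}_\theta;\mathbf{V}_\theta)$ must be computed against the same learned joint $p(u,v)$ from Eq.~\eqref{eq:p_u_v} for the chain (or the paper's algebra) to be coherent.
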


\section{Experiments}
\subsection{Experimental Setup}\label{sec:setup}
\textbf{Data.} 
Table \ref{tab:data} shows the summary of datasets.
ML-100K and Wiki are processed by \cite{cao2021bipartite}, where Wiki has two splits (50\%/40\%) for training.
IMDB, Cornell and Citeceer are document-keyword bipartite graphs \cite{xu2019deep}.

\noindent
\textbf{Evaluation.}
For recommendation to a given user $u$, the score of an item $v$ is determined by the similarity of $\mathbf{u}_\theta$, $\mathbf{v}_\theta$, and then items with top-K scores are selected. 
We use F1, Normalized Discounted Cumulative Gain (NDCG), Mean Average Precision (MAP), and Mean Reciprocal Rank (MRR) as metrics.
For link prediction, given the learned embeddings $\mathbf{U}_\theta$, $\mathbf{V}_\theta$, and edges $E$, we train a logistic regression classifier, and then evaluate it on the test data. 
We use Area Under ROC Curves (AUC) as the metric.
For co-clustering, we assign the cluster with the highest probability as the cluster assignment for the given node, and use Normalized Mutual Information (NMI) and accuracy (ACC) as the metrics.

\begin{table}[t]
    \centering
    \footnotesize
    \setlength\tabcolsep{3pt} 
    {\begin{tabular}{ccccccc}
        \hline
        Dataset & Task & $|U|$ & $|V|$ & $|E|$ & \# Class\\
        \hline
        ML-100K & Recommendation & 943 & 1,682 & 100,000 & -\\
        \hline
        Wikipedia & Link Prediction & 15,000 & 3,214 & 64,095 & -\\
        \hline
        IMDB & Co-Clustering & 617 & 1878 & 20,156 & 17\\
        Cornell & Co-Clustering & 195 & 1,703 & 18,496 & 5\\
        Citeseer & Co-Clustreing & 3,312 & 3,703 & 105,165 & 6\\
        \hline
    \end{tabular}}
    \caption{Summary of the datasets.}
    \label{tab:data}
\end{table}

\noindent
\textbf{Baselines.}
Three groups of baselines are used: 
\textbf{(1) Bipartite Graph}: 
random-walk methods
BiNE \cite{gao2018bine}, PinSage \cite{ying2018graph};
matrix completion methods
GC-MC \cite{berg2017graph}, IGMC \cite{zhang2019inductive}; 
collaborative filtering methods
NeuMF \cite{he2017neural}, NGCF \cite{wang2019neural} 
DirectAU \cite{wang2022towards};
contrastive learning methods
BiGI \cite{cao2021bipartite}, SimGCL \cite{yu2022graph}, COIN \cite{jing2022coin};
\textbf{(2) Graph}:
traditional methods
DeepWalk \cite{perozzi2014deepwalk}, LINE \cite{tang2015line}, Node2vec \cite{grover2016node2vec}, VGAE \cite{kipf2016variational};
contrastive methods
GraphCL \cite{you2020graph}, 
non-contrastive methods
AFGRL \cite{lee2022augmentation}, GraphMAE \cite{hou2022graphmae}.
For heterogeneous graph methods, we compare with the random-walk method
Metapath2vec \cite{dong2017metapath2vec}, and contrastive methods
DMGI \cite{park2020unsupervised}, HDMI \cite{jing2021hdmi}, HeCo \cite{wang2021self}.
\textbf{(3) Co-Clustering}: 
traditional methods
CCInfo \cite{dhillon2003information},
SCC~\cite{dhillon2001co}, 
CCMod \cite{ailem2015co} and SCMK \cite{kang2017twin};
the SOTA deep learning method DeepCC \cite{xu2019deep}.

\begin{table*}[t]
    \centering
    \footnotesize
    \setlength\tabcolsep{0.8pt} 
    {\begin{tabular}{c|cccccccccc|cc}
        \hline
        & \multicolumn{10}{c|}{ML-100K} & Wiki(50\%) & {Wiki(40\%)}\\
        \hline
        Method & F1@10 & NDCG@3 & NDCG@5 & NDCG@10 & MAP@3 & MAP@5 & MAP@10 & MRR@3 & MRR@5 & MRR@10 & AUC & AUC\\
        \hline
        DeepWalk  & 14.20 & 7.17 & 9.32 & 13.13 & 2.72 & 3.54 & 4.92 & 43.86 & 46.83 & 48.75 & 87.19 & 81.60\\
        LINE  & 13.71 & 6.52 & 8.57 & 12.37 & 2.45 & 3.26 & 4.67 & 44.16 & 44.37 & 46.30 & 66.69 & 64.28 \\
        Node2vec & 14.13 & 7.69 & 9.91 & 13.41 & 3.07 & 3.90 & 5.19 & 44.80 & 48.02 & 49.78 & 89.37 & 88.41 \\
        VGAE & 11.38 & 6.43 & 8.18 & 10.93 & 2.35 & 2.95 & 3.94 & 39.39 & 42.32 & 43.68 & 87.81 & 86.32\\
        GraphCL & 19.46 & 10.13 & 13.24 & 18.17 & 4.17 & 5.65 & 8.04 & 58.04 & 60.67 & 61.97  & 94.40 & 93.67 \\
        GraphMAE & 21.28 & 11.35 & 14.67 & 20.11 & 4.66 & 6.22 & 9.01 & 62.96 & 65.12 & 66.12 & 95.12 & 94.48 \\
        AFGRL & 22.04 & 11.70 & 15.16 & 20.85 & 4.75 & 6.44 & 9.34 & 65.18 & 66.95 & 67.95 & 94.80 & 94.22 \\
        \hline
        Metapath2vec & 14.11 & 7.88 & 9.87 & 13.35 & 2.85 & 3.71 & 5.08 & 45.49 & 48.74 & 49.83 & 87.20 & 86.75\\
        DMGI & 19.58 & 10.16 & 13.13 & 18.31 & 3.98 & 5.33 & 7.82 & 59.33 & 61.37 & 62.71 & 93.02 & 92.01\\
        HDMI & 20.51 & 11.07 & 14.32 & 19.42 & 4.59 & 6.18 & 8.67 & 62.25 & 64.38 & 65.44 & 94.18  & 93.57 \\
        HeCo & 19.65 & 11.18 & 14.15 & 18.98 & 4.73 & 6.26 & 8.74 & 58.68 & 60.02 & 61.23 & 94.39 & 93.72\\
        \hline
        PinSage  & 21.68 & 10.95 & 14.51 & 20.27 & 4.52 & 6.18 & 9.13 & 62.56 & 64.77 & 65.76 & 94.27 & 92.79\\
        BiNE  & 14.83 & 7.69 & 9.96 & 13.79 & 2.87 & 3.80 & 5.24 & 48.14 & 50.94 & 52.51 & 94.33 & 93.15\\
        GC-MC & 20.65 & 10.88 & 13.87 & 19.21 & 4.41 & 5.84 & 8.43 & 60.60 & 62.21 & 63.53 & 91.90 & 91.40\\
        IGMC & 18.81 & 9.21 & 12.20 & 17.27 & 3.50 & 4.82 & 7.18 & 56.89 & 59.13 & 60.46 & 92.85 & 91.90\\
        NeuMF & 17.03 & 8.87 & 11.38 & 15.89 & 3.46 & 4.54 & 6.45 & 54.42 & 56.39 & 57.79 & 92.62 & 91.47\\
        NGCF & 21.64 & 11.03 & 14.49 & 20.29 & 4.49 & 6.15 & 9.11 & 62.56 & 64.62 & 65.55 & 94.26 & 93.06\\
        DirectAU & 21.04 & 11.13 & 14.27 & 19.65 & 4.76 & 6.21 & 8.79 & 59.99 & 62.53 & 63.80 & 94.62 & 93.98 \\
        BiGI & 23.36 & 12.50 & 15.92 & 22.14 & 5.41 & 7.15 & 10.50 & 66.01 & 67.70 & 68.78 & 94.91 & 94.08\\
        COIN & 24.78 & 13.48 & 17.37 & 23.62 & 5.71 & 7.82 & 11.34 & 70.58 & 72.14 & 72.76 & 95.30 & 94.53\\
        SimGCL & 25.19 & 13.51 & 17.62 & 24.08 & 5.73 & 7.94 & 11.62 & 71.31 & 73.02 & 73.77 & 95.22 & 94.62\\
        \hline
        \sterling\ & \textbf{25.54} & \textbf{14.01} & \textbf{18.23} & \textbf{24.37} & \textbf{6.06} & \textbf{8.40} & \textbf{11.93} & \textbf{71.99} & \textbf{73.55} & \textbf{74.27} & \textbf{95.48} & \textbf{95.04}\\
        \hline
    \end{tabular}}
    \caption{Performance (\%) of top-K recommendation on ML-100K (left), and link prediction on Wikipedia (right).}
    \label{tab:ml_100k}
\end{table*}

\begin{table}[t]
    \centering
    \footnotesize
    \setlength\tabcolsep{1pt} 
    {\begin{tabular}{c|c|ccccccc}
        \hline
        Metric & Dataset & SCC  & CCMod & CCInfo  & SCMK  & DeepCC & \sterling\\
        \hline
        & IMDB & 25.5 & 21.6 & 18.7 & 18.4 & 26.8 & \textbf{33.4}\\
        NMI &Cornell & 28.8 & 18.9 & 20.6 & 25.7 & 35.4 & \textbf{37.5} \\
        & Citeseer & 15.2 & 16.9 & 17.7 & 21.1 & 29.8 & \textbf{31.6} \\
        \hline
        \hline
        & IMDB & 25.2 & 24.7 & 23.0 & 18.4 & 23.3 & \textbf{34.7}\\
        ACC & Cornell & 58.9 & 55.5 & 56.6 & 49.6 & 68.7 & \textbf{73.4} \\
        & Citeseer & 37.4 & 44.7 & 43.0 & 50.2 & 59.3 & \textbf{63.7} \\
        \hline
    \end{tabular}}
    \caption{NMI (upper) \& ACC (lower) for co-clustering.}
    \label{tab:co_clustering_nmi}
\end{table}

\noindent
\textbf{Implementation.}
The encoder $\mathcal{E}$ is a simple $L$-layer message passing model $\mathbf{u}^{(l+1)}=\text{AGG}(\mathbf{u}^{(l)},\{\mathbf{v}^{(l)}:e_{uv}\in E\})$, where AGG is an aggreagation function.
$\mathbf{{v}}^{(l+1)}$ is obtained in a similar way.
The projector $\mathcal{P}$ is either a Multi-Layer Perceptron (MLP) or identity mapping.
The cluster network $\mathcal{C}$ is a MLP, and its final activation function is softmax.
We set $N_K=N_L$ for co-clusters.
We perform grid search over several hyper-parameters such as $N_{knn}$, $N_K$, $\alpha$, the number of layers, and embedding size $d$.
We set $\delta$ as absolute activation.
Please refer to Appendix for more details.

\begin{table}[t]
    \centering
    \footnotesize
    \setlength\tabcolsep{2pt} 
    {\begin{tabular}{l|ccc}
        \hline
        \multirow{2}{*}{Method} & ML-100K & Wiki(40\%) & Cornell\\
                                & (F1@10) & (AUC) & (NMI)\\
        \hline
        \sterling\ & \textbf{25.54} & \textbf{95.04} & \textbf{37.51}\\
        \hline
        w/o $\mathcal{L}_{glb}$ & 25.12 & 94.37 & 26.02 \\
        w/o $\mathcal{L}_{v}$ & 21.67 & 94.89 & 37.06 \\
        w/o $\mathcal{L}_{u}$ & 25.38 & 94.74 & 35.74 \\
        w/o $\mathcal{L}_{uv}$ & 0.20 & 94.76 & 30.93 \\
        \hline
        w/o $\mathbf{A}_{meta}$ & 25.41 & 95.01 &  26.61\\
         w/o $\mathbf{A}_{emb}$  & 25.15 & 94.42 & 37.43 \\
        w/o noise filter(Eq.\eqref{eq:noise_filter}) & 25.17 & 94.70 & 36.82 \\
        abs $\rightarrow$ ReLU(Eq.\eqref{eq:a_emb}) & 25.20 & 94.84 & 36.53 \\
        \hline
        w/o $s_{aa}$ & 20.32 & 94.23 & 32.75 \\
        w/o $s_{emb}$ & 24.84 & 94.84 & 37.10 \\
        AA $\rightarrow$ Co-HITS & 24.94 & 94.90 & 35.87 \\
        \hline
    \end{tabular}}
    \caption{Ablation study.}
    \label{tab:ablation}
\end{table}

\subsection{Overall Performance}
\textbf{Recommendation.}
The results on ML-100K are shown in the left part of Table \ref{tab:ml_100k}.
The upper/middle/lower groups of baselines are homogeneous/heterogeneous/bipartite methods respectively.
Comparing the best-performing baselines of the three groups, we observe that (1) the contrasive homogeneous and heterogeneous methods (GraphCL and HDMI) are competitive to each other, and non-contrastive method AFGRL performs better than contrastive methods;
(2) the bipartite graph method SimGCL is significantly better than GraphCL/AFGRL/HDMI.
This observation demonstrates that homogeneous and heterogeneous graph methods perform sub-optimally on bipartite graphs as they do not capture the synergies of bipartite graphs.
\sterling\ further outperforms SimGCL over all metrics, indicating the superiority of the non-contrastive SSL approach over the contrastive approaches for bipartite graphs.

\noindent
\textbf{Link Prediction.}
The results are shown in the right part of Table \ref{tab:ml_100k}, where the upper/middle/lower parts are homogeneous/heterogeneous/bipartite methods.
We could observe:
(1) \sterling\ has the best overall performance;
(2) homogeneous/heterogeneous approaches are sub-optimal on bipartite graphs;
(3) the non-contrastive method (\sterling) is better than contrastive method (COIN/SimGCL).


\noindent
\textbf{Co-Clustering.}
The results in Tables \ref{tab:co_clustering_nmi} show that among all the baseline methods, DeepCC achieves the highest NMI and ACC scores on all datasets.
DeepCC obtains node embeddings by reconstructing the input matrix, and trains cluster networks based on the fixed prior distribution derived from the input matrix.
\sterling\ has further improvements over DeepCC, indicating that (1) the non-contrastive SSL is better than the re-construction based representation learning, and (2) the learned joint distribution $p(U;V)$ helps improve the performance over the fixed prior distribution.

\subsection{Ablation Study}
\textbf{Components in $\mathcal{L}$.}
In the upper part of Table \ref{tab:ablation}, we study the impact of each component in the loss function.
(1) The global synergies $\mathcal{L}_{glb}$ are important for all datasets.
Note that for the Cornell dataset (co-clustering task), removing $\mathcal{L}_{glb}$ means we use a dummy un-trained $\mathcal{C}_\theta$, which is a random deterministic function mapping similar node embeddings to similar cluster distributions.
An NMI score of 26.02 rather than 0 means that the local loss $\mathcal{L}_{loc}$ can discover the clusters of node embeddings to a certain degree.
(2) The intra-types synergies $\mathcal{L}_u$ and $\mathcal{L}_v$ have different impacts for different datasets.
For ML-100K, $\mathcal{L}_v$ is more important, and for Wiki and Cornell, $\mathcal{L}_u$ has a higher impact.
(3) The inter-type synergies $\mathcal{L}_{uv}$ are indispensable for all datasets. 
Surprisingly, for ML-100K (recommendation task) the model can barely recommend correct items to given users. 
These results imply that for simpler tasks, which only require class/cluster-level predictions, such as link prediction (0/1 classification) and co-clustering, the implicit intra-type synergies $\mathcal{L}_{u}$, $\mathcal{L}_{v}$ and global synergies could provide a large amount of the information needed.
For harder tasks requiring precise element-level predictions, e.g., recommendation (ranking items for a given users), explicit inter-type information $\mathcal{L}_{uv}$ is required.

\begin{figure*}[t]
\centering
\begin{subfigure}[b]{.2\textwidth}
\includegraphics[width=\linewidth]{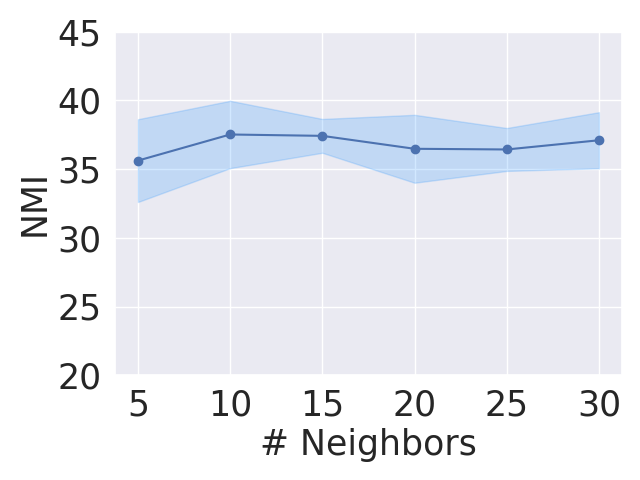}
 \caption{$N_{knn}$}\label{fig:cornell_topk}
\end{subfigure}\quad\quad
\begin{subfigure}[b]{.2\textwidth}
  \includegraphics[width=\linewidth]{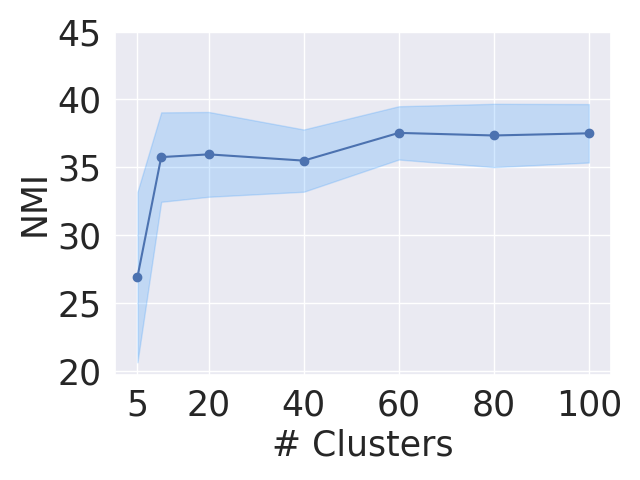}
  \caption{$N_K=N_L$}\label{fig:cornell_cluster}
\end{subfigure}\quad\quad
\begin{subfigure}[b]{.2\textwidth}
  \includegraphics[width=\linewidth]{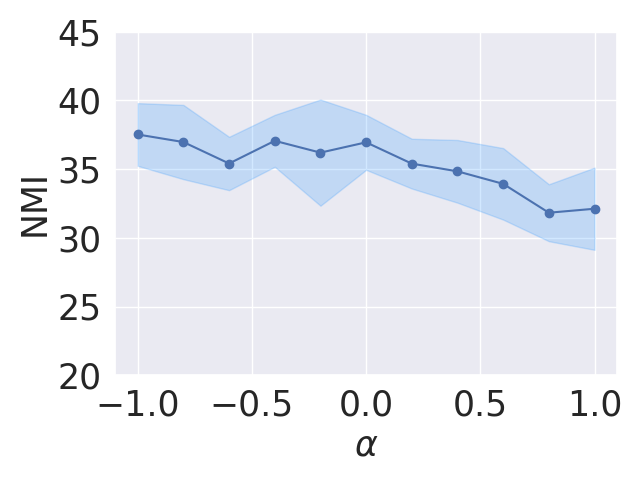}
  \caption{$\alpha$}\label{fig:cornell_alpha}
\end{subfigure}\quad\quad
\begin{subfigure}[b]{.2\textwidth}
  \includegraphics[width=\linewidth]{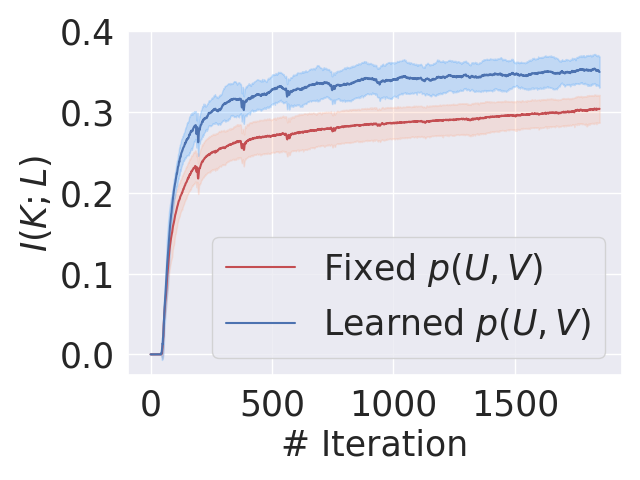}
  \caption{Convergence of $I(K;L)$}\label{fig:cornell_mi}
\end{subfigure}
\caption{(a-c) Sensitivity analysis and (d) convergence of $I(K;L)$ on the Cornell dataset.}
\label{fig:sensitivity}
\end{figure*}
\begin{figure*}
\centering
\begin{subfigure}[b]{.17\textwidth}
\includegraphics[width=\linewidth]{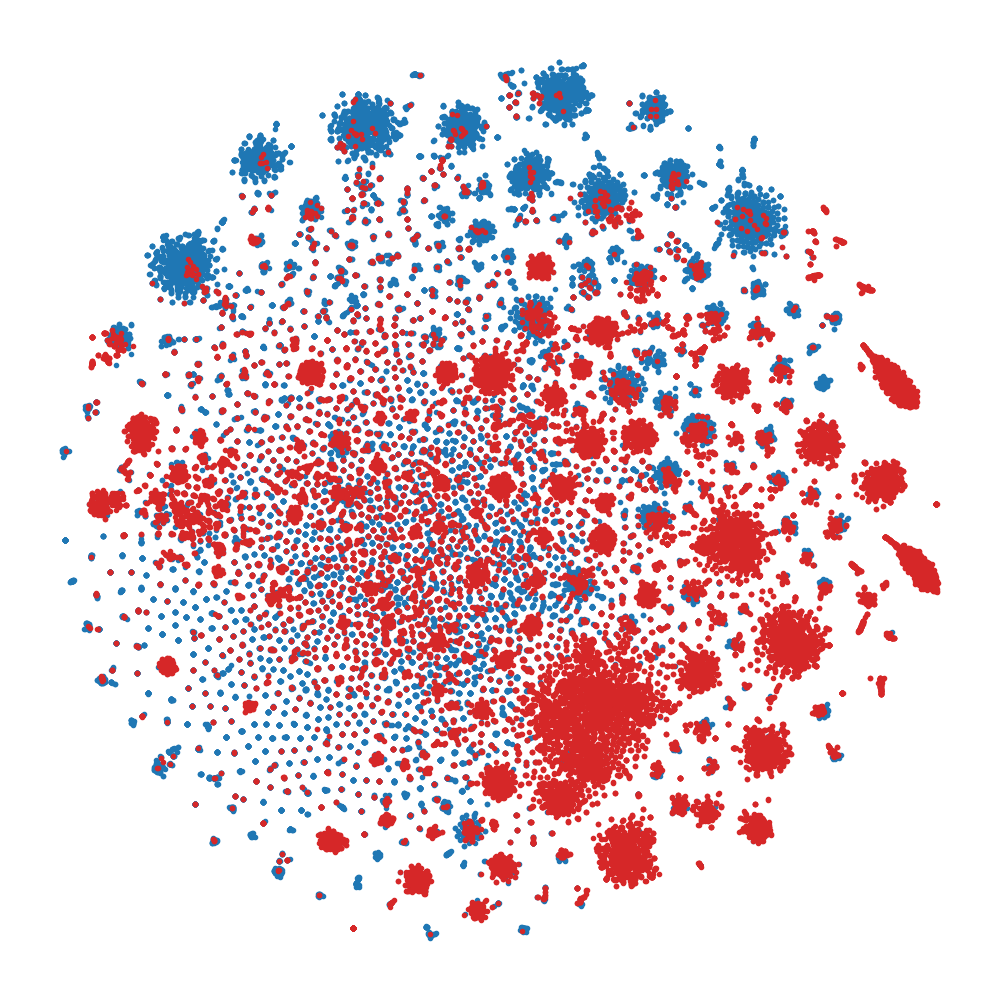}
 \caption{$\mathcal{L}_{loc}$}\label{fig:tsne_no_mi}
\end{subfigure}\quad
\begin{subfigure}[b]{.17\textwidth}
  \includegraphics[width=\linewidth]{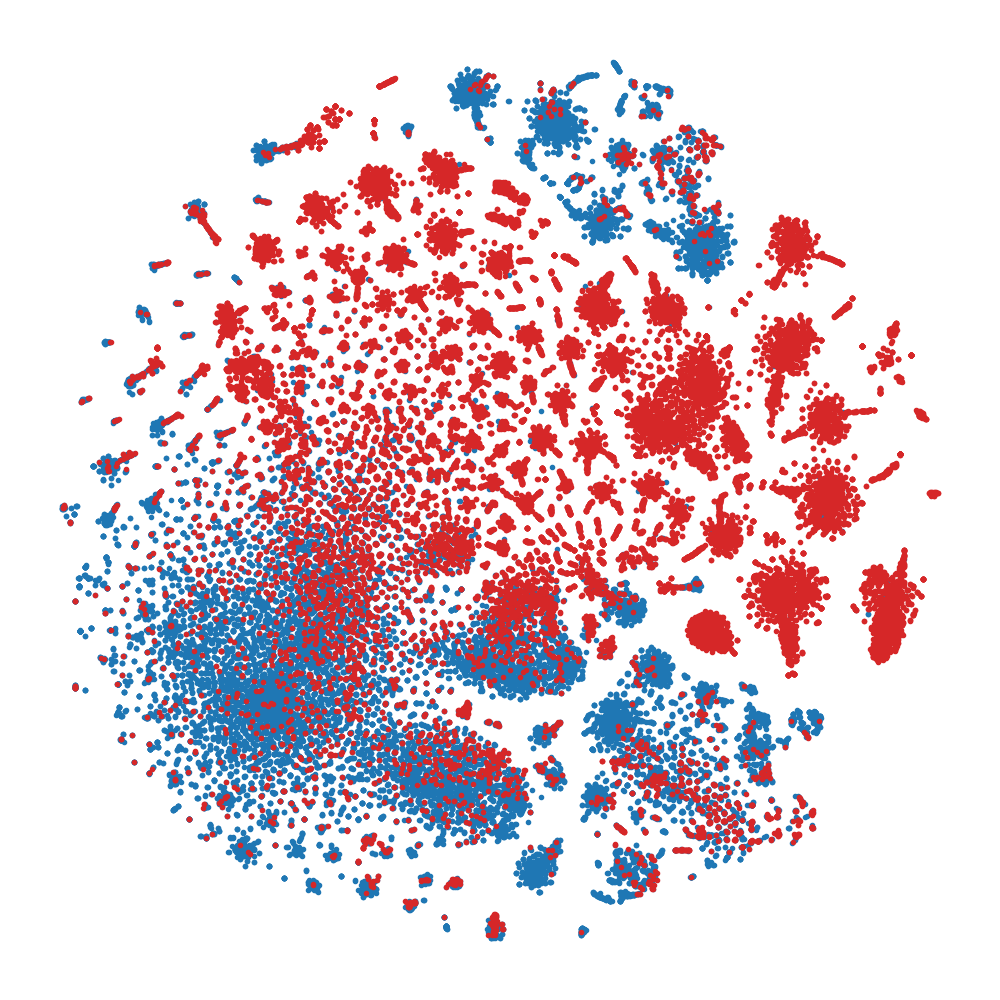}
  \caption{$\mathcal{L}_{loc}+\mathcal{L}_{glb}$}\label{fig:tsne_full}
\end{subfigure}\quad
\begin{subfigure}[b]{.28\textwidth}
\includegraphics[width=\linewidth]{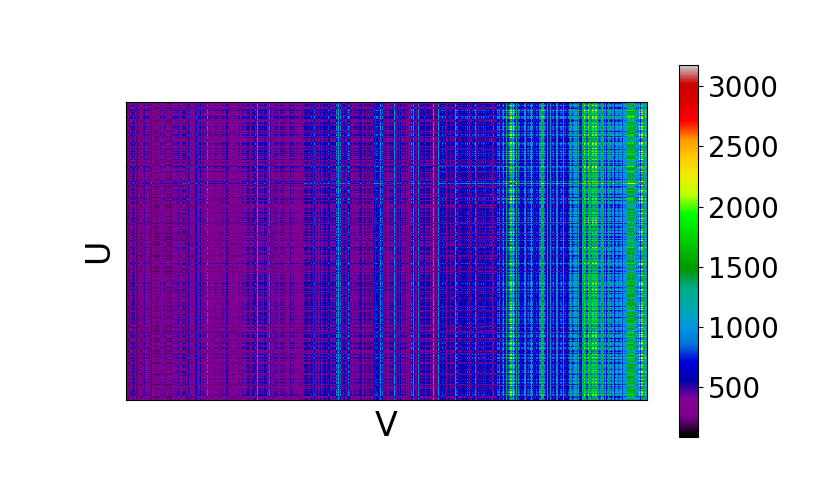}
 \caption{Before filtering.}\label{fig:no_filter}
\end{subfigure}
\begin{subfigure}[b]{.28\textwidth}
  \includegraphics[width=\linewidth]{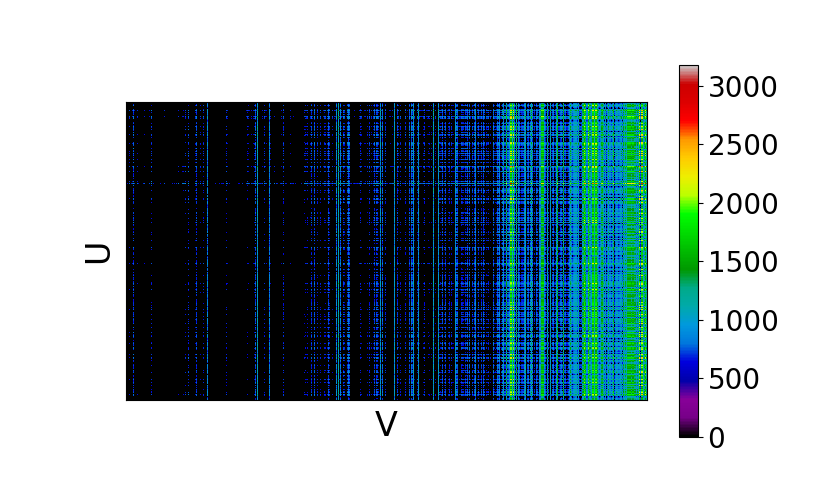}
  \caption{After filtering.}\label{fig:filter}
\end{subfigure}
\caption{(a-b) T-SNE visualization on Wiki (40\%). (c-d) Visualization of noise filter on ML-100K.}
\label{fig:visualization}
\end{figure*}

\noindent\textbf{Components in $\mathcal{L}_{glb}$.}
The results in the middle part of Table \ref{tab:ablation} show that:
(1) $\mathbf{A}_{meta}$ has more impact for Cornell and $\mathbf{A}_{emb}$ is more influential for ML-100K and Wiki; 
(2) noise filtering is useful;
(3) surprisingly, the absolute activation performs better than ReLU.
We believe this is because the datasets only record strength of connections but do not distinguish the sign (positive/negative) of connections.

\noindent
\textbf{Components in $\mathcal{L}_{loc}$.}
The results in the lower part of Table \ref{tab:ablation} show that
(1) both $s_{aa}$, $s_{emb}$ are crucial, and $s_{aa}$ plays a more important role;
(2) the AA index is better than another popular index Co-HITS \cite{deng2009generalized} since the AA index will raise the weights of the low-degree neighbors, and lower the weights of the high-degree neighbors.

\begin{table}[t]
    \centering
    \footnotesize
    \setlength\tabcolsep{2pt} 
    {\begin{tabular}{l|ccc}
        \hline
        \multirow{2}{*}{Method} & ML-100K & Wiki(40\%) & Cornell\\
        & (F1@10) & (AUC) & (NMI)\\
        \hline
        \sterling\ & \textbf{25.54} & \textbf{95.04} & \textbf{37.51}\\
        \hline
        $\mathbf{A}_{emb}=\mathbf{U}_\theta\mathbf{V}_\phi^T$ & 25.37 & 94.48 & 36.92\\
        $\mathbf{A}_{emb}=\mathbf{U}_\phi\mathbf{V}_\theta^T$ & 25.46 & 94.91 & 35.01 \\
        $\mathbf{A}_{emb}=\mathbf{U}_\theta\mathbf{V}_\theta^T$ & 25.49 & 94.71 & 36.57\\
        $\mathbf{A}_{emb}=\mathbf{U}_\phi\mathbf{V}_\phi^T$ & 25.46 & 94.36 & 36.09\\
        \hline
        $s_{emb}=\frac{1}{2}(\mathbf{u}_\theta^T\mathbf{v}_\phi + \mathbf{u}_\phi^T\mathbf{v}_\theta)$ & 25.40 & 94.97 & 35.84 \\
        $s_{emb}=\mathbf{u}_\phi^T\mathbf{v}_\theta$ & 25.46 & 94.93 & 36.82\\
        $s_{emb}=\mathbf{u}_\theta^T\mathbf{v}_\theta$ & 25.34 & 95.03 & 37.34\\
        $s_{emb}=\mathbf{u}_\phi^T\mathbf{v}_\phi$ & 25.43 & 94.86 & 35.13\\
        \hline
    \end{tabular}}
    \caption{Different Variants of $\mathbf{A}_{emb}$ and $s_{emb}$.}
    \label{tab:ablation_other}
\end{table}

\subsection{Other Results}
\noindent
\textbf{Sensitivity.}
The sensitivity analysis on the Cornell dataset is shown in Fig. \ref{fig:cornell_topk}-\ref{fig:cornell_alpha}.
The optimal numbers of k-NN, co-clusters and noise threshold are: $N_{knn}\approx10$, $N_K=N_L\geq60$, and $\alpha\in[-1, 0]$.
The optimal $N_K$ is larger than the real number of classes (5), implying over-clustering is beneficial.

\noindent
\textbf{Convergence of MI.}
Fig. \ref{fig:cornell_mi} shows the $I(K;L)$ of each training iteration on the Cornell dataset.
$I(K;L)$ will converge, using either the fixed or learned $p(U,V)$.
However, the learned $p(U,V)$ results in higher $I(K;L)$ in the end.

\noindent
\textbf{Visualization of Emb.} T-SNE \cite{van2008visualizing} visualization for embeddings of Wiki(40\%) test data is shown in Fig. \ref{fig:tsne_no_mi}-\ref{fig:tsne_full}.
Each embedding is the concatenation of a given ($\mathbf{u}_\theta$, $\mathbf{v}_\theta$) pair, which is the input of the logistic regression classifier.
These embeddings are labeled with 1/0, indicating whether a pair is true/false.
The two colors in Fig. \ref{fig:tsne_no_mi}-\ref{fig:tsne_full} correspond to the two classes.
Fig. \ref{fig:tsne_no_mi}-\ref{fig:tsne_full} show that $\mathcal{L}_{glb}$ further helps discover the underlying connectivity of ($\mathbf{u}_\theta$, $\mathbf{v}_\theta$) and better separate embeddings of the two classes than $\mathcal{L}_{loc}$ alone, which corroborates Theorem \ref{theorem:information_bound}.

\noindent
\textbf{Visualization of Noise Filtering.}
We visualize the effect of noise filtering (Eq.\eqref{eq:noise_filter}) on ML-100K in Fig. \ref{fig:no_filter}-\ref{fig:filter}.
The weak connections can be effectively filtered out since the purple dots in Fig. \ref{fig:no_filter} are removed (becomes black) after filtering in Fig. \ref{fig:filter}.
The matrix densities (the ratio of non-zero elements) in Fig. \ref{fig:no_filter} and \ref{fig:filter} are 100\% and 35.66\% respectively.

\noindent
\textbf{Different Variants.}
We show the results of different variants of $\mathbf{A}_{emb}$ and $s_{emb}$ (normalization is dropped for clarity) in Table \ref{tab:ablation_other}, which indicate that the variants used in \sterling\ has the best overall performance.

\section{Conclusion}
In this paper, we introduce a novel non-contrastive SSL method \sterling\ for bipartite graphs, which preserves both local inter/intra-type synergies and global co-cluster synergies.
Theoretical analysis indicates that \sterling\ could improve the connectivity of the two node types in the embedding space.
Empirical evaluation shows that the node embeddings extracted by \sterling\ have SOTA performance on various downstream tasks.

\section{Acknowledgments}
This work is partly supported by NSF (\#2229461),
DARPA (HR001121C0165), 
NIFA (2020-67021-32799), 
MIT-IBM Watson AI Lab, 
and IBM-Illinois Discovery Accelerator Institute. 
The content of the information in this document does not necessarily reflect the position or the policy of the Government or Amazon, and no official endorsement should be inferred.  The U.S. Government is authorized to reproduce and distribute reprints for Government purposes notwithstanding any copyright notation here on.
Dr. Chanyoung Park is supported by the IITP grant funded by the Korea government (MSIT) (RS-2023-00216011).

\bibliography{aaai24}
\appendix
\section{Theoretical Analysis}
We theoretically study how \sterling\ helps improve the embedding quality.
In Theorem \ref{theorem_appendix:information_bound}, we prove that $I(K;L)$ is a lower bound of $I(\mathbf{U}_\theta;\mathbf{V}_\theta)$, indicating that maximizing $I(K;L)$ could improve the connectivity of $\mathbf{U}_\theta$ and $\mathbf{V}_\theta$ in the embedding space.
We first prove a variational bound in Lemma \ref{lemma_appendix}, based on which we prove Theorem~\ref{theorem_appendix:information_bound}.

\begin{lemma_apd}[Variational Bound]\label{lemma_appendix}
For \sterling, the following inequality holds:
\begin{equation}
    \log p(k)p(l) \geq \sum_{\mathbf{u}_\theta, \mathbf{v}_\theta}p(\mathbf{u}_\theta, \mathbf{v}_\theta|k, l)\log\frac{p(\mathbf{u}_\theta, k)p(\mathbf{v}_\theta, l)}{p(\mathbf{u}_\theta, \mathbf{v}_\theta|k, l)}
\end{equation}
where $k\in\{1, \cdots, N_K\}$, $l\in\{1, \cdots, N_L\}$,  are the indices of clusters, 
$\mathbf{u}_\theta\in\mathbf{U}_\theta$, $\mathbf{v}_\theta\in\mathbf{V}_\theta$ are the node embeddings.
\end{lemma_apd}
\begin{proof}
First, we have 
\begin{equation}
p(k)=\sum_{\mathbf{u}_\theta}p(\mathbf{u}_\theta, k),\quad p(l)=\sum_{\mathbf{v}_\theta}p(\mathbf{v}_\theta, l)
\end{equation}
and therefore, we have
\begin{equation}
p(k)p(l)=\sum_{\mathbf{u}_\theta \mathbf{v}_\theta}p(\mathbf{u}_\theta, k)p(\mathbf{v}_\theta, l)
\end{equation}
As a result, we have:
\begin{equation}
\begin{split}
       &\log p(k)p(l)\\
       =& \log\sum_{\mathbf{u}_\theta, \mathbf{v}_\theta}p(k , \mathbf{u}_\theta)p(l, \mathbf{v}_\theta)\frac{p(\mathbf{u}_\theta, \mathbf{v}_\theta|k, l)}{p(\mathbf{u}_\theta, \mathbf{v}_\theta|k, l)}\\
       \geq& \sum_{\mathbf{u}_\theta, \mathbf{v}_\theta}p(\mathbf{u}_\theta, \mathbf{v}_\theta|k, l)\log\frac{p(\mathbf{u}_\theta, k)p(\mathbf{v}_\theta, l)}{p(\mathbf{u}_\theta, \mathbf{v}_\theta|k, l)} 
\end{split}
\end{equation}
where the inequality holds according to Jensen's inequality.
\end{proof}

\begin{theorem}[Information Bound]\label{theorem_appendix:information_bound}
The mutual information $I(\mathbf{U}_\theta;\mathbf{V}_\theta)$ of embeddings $\mathbf{U}_\theta$ and $\mathbf{V}_\theta$ is lower-bounded by the mutual information of co-clusters $I(K;L)$:
\begin{equation}\label{eq:theorem_bound2}
    I(K;L)\leq I(\mathbf{U}_\theta;\mathbf{V}_\theta)
\end{equation}
\end{theorem}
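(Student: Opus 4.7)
The plan is to combine Lemma 1 with the factorization $p(k,l \mid \mathbf{u}_\theta, \mathbf{v}_\theta) = p(k \mid \mathbf{u}_\theta)\, p(l \mid \mathbf{v}_\theta)$ already established in the methodology section. That factorization reflects the fact that the two cluster networks $\mathcal{C}_\theta^U\circ\mathcal{P}_\theta^U$ and $\mathcal{C}_\theta^V\circ\mathcal{P}_\theta^V$ act separately on each node type, which endows the model with a Markov-chain-like structure $K\leftarrow\mathbf{U}_\theta,\ L\leftarrow\mathbf{V}_\theta$ where $K$ and $L$ are conditionally independent given $(\mathbf{U}_\theta,\mathbf{V}_\theta)$. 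Thus, the theorem is a variational incarnation of the data-processing inequality, and the lemma is precisely the tool that converts this intuition into an inequality on entropies.

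First I would expand $I(K;L)=\sum_{k,l} p(k,l)\log\frac{p(k,l)}{p(k)p(l)}$ and substitute the lower bound on $\log p(k)p(l)$ from Lemma 1, turning the subtraction into an upper bound on $I(K;L)$. Collecting the logarithms inside the nested sum gives an integrand of the form $\log\frac{p(k,l)\,p(\mathbf{u}_\theta,\mathbf{v}_\theta \mid k,l)}{p(\mathbf{u}_\theta,k)\,p(\mathbf{v}_\theta,l)}$, whose numerator simplifies to the full joint $p(\mathbf{u}_\theta,\mathbf{v}_\theta,k,l)$ by the chain rule. This step also converts the outer weighting $p(k,l)p(\mathbf{u}_\theta,\mathbf{v}_\theta \mid k,l)$ into the same joint, which is crucial for the subsequent marginalization.

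Next I would apply the conditional-independence factorization twice: once to write $p(\mathbf{u}_\theta,\mathbf{v}_\theta,k,l)=p(\mathbf{u}_\theta,\mathbf{v}_\theta)\,p(k\mid\mathbf{u}_\theta)\,p(l\mid\mathbf{v}_\theta)$, and once to write $p(\mathbf{u}_\theta,k)\,p(\mathbf{v}_\theta,l)=p(\mathbf{u}_\theta)\,p(\mathbf{v}_\theta)\,p(k\mid\mathbf{u}_\theta)\,p(l\mid\mathbf{v}_\theta)$. The posterior cluster terms cancel exactly, leaving the log density ratio $\log\frac{p(\mathbf{u}_\theta,\mathbf{v}_\theta)}{p(\mathbf{u}_\theta)p(\mathbf{v}_\theta)}$ inside the sum. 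Finally, summing over $k$ and $l$ collapses $\sum_{k,l} p(\mathbf{u}_\theta,\mathbf{v}_\theta,k,l)$ into $p(\mathbf{u}_\theta,\mathbf{v}_\theta)$, which recovers exactly the KL divergence defining $I(\mathbf{U}_\theta;\mathbf{V}_\theta)$.

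The main obstacle is the bookkeeping at the boundary between the two stages: one has to notice that the $p(\mathbf{u}_\theta,\mathbf{v}_\theta \mid k,l)$ appearing in the denominator of Lemma 1 cancels precisely against the outer factor $p(k,l)$ to produce the four-way joint, rather than leaving a residual that would weaken the bound. Once this cancellation is spotted and the conditional-independence assumption is invoked to eliminate the $p(k\mid\mathbf{u}_\theta)$ and $p(l\mid\mathbf{v}_\theta)$ factors, the remainder of the argument is just Bayes' rule and a marginalization, so the proof is short. No continuity or measurability assumptions beyond those implicit in Lemma 1 are needed.
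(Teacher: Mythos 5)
Your proof follows the same route as the paper's: apply Lemma~1 to lower-bound $\log p(k)p(l)$, collect logarithms so the chain rule produces the four-way joint $p(\mathbf{u}_\theta,\mathbf{v}_\theta,k,l)$, cancel the cluster posteriors $p(k\mid\mathbf{u}_\theta)p(l\mid\mathbf{v}_\theta)$ using the conditional-independence factorization, and marginalize out $k,l$ to recover $I(\mathbf{U}_\theta;\mathbf{V}_\theta)$. Your observation that this is a variational form of the data-processing inequality for the Markov chain $K - \mathbf{U}_\theta - \mathbf{V}_\theta - L$ is a correct and useful conceptual framing the paper does not state; the only small slip is describing the expansion $p(\mathbf{u}_\theta,k)p(\mathbf{v}_\theta,l)=p(\mathbf{u}_\theta)p(k\mid\mathbf{u}_\theta)p(\mathbf{v}_\theta)p(l\mid\mathbf{v}_\theta)$ as a second application of conditional independence when it is just the chain rule, but this does not affect the argument.
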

\begin{proof}
According to Lemma \ref{lemma_appendix}, we have:
\begin{equation}\label{eq:theorem_1}
\begin{split}
    & I(K; L)\\ 
    =& \sum_{k, l}p(k, l)\frac{\log p(k, l)}{\log p(k)p(l)}\\
    \leq& \sum_{k, l}p(k, l)\sum_{\mathbf{u}_\theta, \mathbf{v}_\theta}p(\mathbf{u}_\theta, \mathbf{v}_\theta|k, l)\log \frac{p(k, l)p(\mathbf{u}_\theta, \mathbf{v}_\theta|k, l)}{p(\mathbf{u}_\theta, k)p(\mathbf{v}_\theta, l)}\\
    =& \sum_{\mathbf{u}_\theta, \mathbf{v}_\theta, k, l}p(\mathbf{u}_\theta, \mathbf{v}_\theta, k, l)\log \frac{p(\mathbf{u}_\theta, \mathbf{v}_\theta, k, l)}{p(\mathbf{u}_\theta, k)p(\mathbf{v}_\theta, l)}\\
    =& \sum_{\mathbf{u}_\theta, \mathbf{v}_\theta, k, l}p(\mathbf{u}_\theta, \mathbf{v}_\theta, k, l)\log \frac{p(k, l|\mathbf{u}_\theta, \mathbf{v}_\theta)p(\mathbf{u}_\theta, \mathbf{v}_\theta)}{p(\mathbf{u}_\theta)p(k|\mathbf{u}_\theta)p(\mathbf{v}_\theta)p(l|\mathbf{v}_\theta)}\\ =& R
\end{split}
\end{equation}
Since $\mathcal{C}_\theta^U\circ\mathcal{P}_\theta^U$ and $\mathcal{C}_\theta^V\circ\mathcal{P}_\theta^V$ have separate sets of parameters and inputs, therefore, it is natural to have $p(k, l|\mathbf{u}_\theta, \mathbf{v}_\theta)=p(k|\mathbf{u}_\theta)p(l|\mathbf{v}_\theta)$.
As a result, we have:
\begin{equation}\label{eq:theorem_2}
\begin{split}
    R &= \sum_{\mathbf{u}_\theta, \mathbf{v}_\theta, k, l}p(\mathbf{u}_\theta, \mathbf{v}_\theta, k, l)\log \frac{p(\mathbf{u}_\theta, \mathbf{v}_\theta)}{p(\mathbf{u}_\theta)p(\mathbf{v}_\theta)}\\
    &= \sum_{\mathbf{u}_\theta, \mathbf{v}_\theta}p(\mathbf{u}_\theta, \mathbf{v}_\theta)\log \frac{p(\mathbf{u}_\theta, \mathbf{v}_\theta)}{p(\mathbf{u}_\theta)p(\mathbf{v}_\theta)} = I(\mathbf{U}_\theta;\mathbf{V}_\theta)
\end{split}
\end{equation}
Combine \eqref{eq:theorem_1} and \eqref{eq:theorem_2}, and we have \eqref{eq:theorem_bound2}. 
\end{proof}



\section{Experimental Setup}
\subsection{Datasets}
Descriptions of datasets are presented in Table \ref{tab_apd:data}.
\textbf{ML-100K}\footnote{\url{https://grouplens.org/datasets/movielens/100k/}}
is collected via the MovieLens\footnote{\url{https://movielens.org/}} 
website, which contains 100K movie ratings from 943 users on 1682 movies. 
Each user has rated at least 20 movies, and the relation between users and items are converted to binary \cite{cao2021bipartite}.
This dataset is used for the recommendation task.
\textbf{Wikipedia}\footnote{\url{https://github.com/clhchtcjj/BiNE/tree/master/data/wiki}} contains the edit relationship between authors and pages, which is used for link prediction
, and it has two different splits (50\%/40\%) for training~\cite{cao2021bipartite}.
\textbf{IMDB}\footnote{\url{https://github.com/dongkuanx27/Deep-Co-Clustering/blob/master/DeepCC/Data/IMDb_movies_keywords.mat}}, \textbf{Cornell}\footnote{\url{https://github.com/dongkuanx27/Deep-Co-Clustering/blob/master/DeepCC/Data/WebKB_cornell.mat}}, and  \textbf{Citeseer}\footnote{\url{https://linqs.org/datasets/\#citeseer-doc-classification}} 
are document-keyword bipartite graphs, which are used for the co-clustering task~\cite{xu2019deep}.


\begin{table*}[t]
    \centering
    \footnotesize
    \scalebox{1.0}{\begin{tabular}{cccccccc}
        \hline
        Dataset & Task & Evaluation & $|U|$ & $|V|$ & $|E|$ & \# Class\\
        \hline
        ML-100K & Recommendation & F1, NDCG, MAP, MRR & 943 & 1,682 & 100,000 & -\\
        \hline
        Wikipedia & Link Prediction & AUC-ROC & 15,000 & 3,214 & 64,095 & -\\
        \hline
        IMDB & Co-Clustering & NMI, ACC & 617 & 1878 & 20,156 & 17\\
        Cornell & Co-Clustering & NMI, ACC & 195 & 1,703 & 18,496 & 5\\
        Citeseer & Co-Clustreing & NMI, ACC & 3,312 & 3,703 & 105,165 & 6\\
        \hline
    \end{tabular}}
    \caption{Descriptions of the datasets.}
    \label{tab_apd:data}
\end{table*}

\begin{table*}[t]
    \centering
    \footnotesize
    \scalebox{1.0}{\begin{tabular}{cccccccccccccc}
        \hline
        Dataset &$N_{knn}$ & $N_K=N_L$ & $\alpha$ & $d$ & $L$ & Skip Conn. & $\mathcal{P}$ & $n$ & lr & Epoch \\
        \hline
        ML-100K & 10 & 10 & 0 & 2,048 & 1 & False & Identity & 2 & $5\times 10^{-4}$ & 10\\
        \hline
        Wikipedia & 10 & 10 & -0.8 & 512 & 2 & True & MLP & 3 & $1\times 10^{-4}$ & 20\\
        \hline
        IMDB & 10 & 100 & -1 & 2048 & 1 & True & MLP & 1 & $5\times 10^{-4}$ & 50\\
        Cornell & 10 & 100 & -1 & 2048 & 1 & True & MLP & 1 & $5\times 10^{-4}$ & 10\\
        Citeseer & 10 & 100 & -1 & 2048 & 1 & True & MLP & 1 & $5\times 10^{-4}$ & 10\\
        \hline
    \end{tabular}}
    \caption{Hyper-parameters.}
    \label{tab:detailed_settings}
\end{table*}

\subsection{Neural Network Architecture}
\paragraph{Encoder.}
The encoder $\mathcal{E}$ 
(subscripts $\theta$ and $\phi$ are dropped for clarity) 
is a simple $L$-layer message passing model.
The updating functions for $\mathbf{u}$ are presented in Equation \eqref{eq:encoder_1}\eqref{eq:encoder_2}.
$\mathbf{v}$ is obtained via similar updating functions as $\mathbf{u}$ but with a different set of parameters.
\begin{equation}\label{eq:encoder_1}
    \mathbf{u}^{(l+1)} =\delta( \mathbf{W}_1^{(l+1)}\cdot\text{Mean}(\{\mathbf{v}^{(l)}:e_{uv}\in E\}))
\end{equation}
where $\delta$ is an activation function, $\mathbf{W}_1^{(l+1)}$ is the weight of the $l+1$-th layer, $\mathbf{v}^{(0)}$ is a randomly initialized embedding vector.
We set $\delta$ as ReLU throughout the experiments.
There is an optional skip connection to prevent potential over-smoothing:
\begin{equation}\label{eq:encoder_2}
    \mathbf{u}^{(l+1)} =\delta(\mathbf{W}_{2}^{(l+1)}[\mathbf{u}^{(l+1)}||\mathbf{u}^{(l)}])
\end{equation}
where $\mathbf{W}^{(l+1)}_{2}$ is the weight and $||$ is the concatenation operation.
We set $\delta$ as Tanh throughout the experiments.

\paragraph{Projector.}
The projectors $\mathcal{P}^U$ and $\mathcal{P}^V$ are either implemented by two-layer Multi-Layer Perceptron (MLPs) with Tanh activation or an identity mapping function.

\paragraph{Clustering Network.}
The clustering networks $\mathcal{C}^U$ and $\mathcal{C}^V$ are implemented by two-layer MLPs.
The activation functions of the fist and second layers are Tanh and Softmax.

\subsection{Implementation Details} 
Following \cite{grill2020bootstrap}, we update the parameter $\tau$ of Exponential Moving Average (EMA) by:
\begin{equation}
    \tau \leftarrow 1 - (1-\tau_{init})\cdot(\cos(\pi k/K)+1)/2
\end{equation}
where $k$ and $K$ are current and maximum training epochs, and we set the initial value of $\tau$ as $\tau_{init}=0.99$.
We use absolute activation function for $\mathbf{A}_{emb}$ in Equation \eqref{eq:a_emb}. 
The other settings are presented in Table \ref{tab:detailed_settings}, including the number of k-NN ($N_{knn}$), the number of co-clusters $N_K=N_L$, filter threshold $\alpha$, the number of the encoder layers $L$, whether use skip connections in encoder, whether use identity mapping or MLP for $\mathcal{P}$, the order of meta-path $n$ for $\mathbf{A}_{meta}$, learning rate and the number of epochs for training.
The coefficients
$\lambda_{uv}$, $\lambda_{u}$ and $\lambda_{v}$ are searched within [0.001, 0.01, 0.1, 1, 2, 5].
The code is implemented by PyTorch.
All the experiments are conudcted on a single Nvidia Tesla V-100 32G GPU.

\end{document}